\documentclass[11pt]{article}

\usepackage[margin=1in]{geometry}
\usepackage{amsmath,amssymb,amsthm}
\usepackage{graphicx}
\usepackage{booktabs}
\usepackage{algorithm}
\usepackage{algorithmic}
\usepackage{booktabs}
\usepackage{multirow}
\newtheorem{theorem}{Theorem}
\newtheorem{lemma}[theorem]{Lemma}
\newtheorem{Proposition}{Proposition}

\usepackage{pgfplots}
\pgfplotsset{compat=1.18}
\usepackage{amsmath}
\newtheorem{definition}{Definition}
\usepackage{hyperref}
\usepackage{natbib}  
\usepackage{caption} 
\usepackage{booktabs}
\usepackage{booktabs}
\usepackage{pgfplots}

\usepackage{pgfplots}
\pgfplotsset{compat=1.18}
\usepackage{newfloat}
\usepackage[T1]{fontenc}
\usepackage{listings}
\title{Individual Fairness in Hierarchical Clustering}
\author{
 Binita Maity and Shrutimoy Das\\
Indian Institute of Technology, Gandhinagar\\
\texttt{\{binitamaity,shrutimoydas\}@iitgn.ac.in}
}

\begin{document}

\maketitle

\begin{abstract}
Hierarchical clustering produces ultrametric representations that impose strong global geometric constraints and may distort local similarities in ways that disproportionately affect individual data points. We study hierarchical clustering under an individual fairness requirement that bounds relative distortion within local $k$-nearest neighborhoods. We formulate this requirement as a feasibility problem over dominated ultrametrics and characterize the minimal multiplicative slack required for feasibility. We identify a sharp local threshold, prove stability under bounded perturbations, establish monotonicity in $k$, and show an intrinsic $\Theta(\log n)$ separation between local and global realizability. Experiments on synthetic and real world datasets support our theoretical results.
\end{abstract}

\section{Introduction}

Hierarchical clustering is a classical problem in unsupervised learning \cite{johnson1967hierarchical}, where pairwise distances are given and the goal is to construct a nested sequence of partitions represented as a \emph{dendrogram.} Unlike flat clustering, hierarchical clustering produces a multi resolution representation of the data, with merge heights defining an \emph{ultrametric} over the points \cite{murtagh1983survey}. Ultrametrics impose strong global geometric constraints, i.e.,  every triple must satisfy the ultrametric inequality, meaning the two largest pairwise distances coincide. Hierarchical clustering is widely used in exploratory data analysis, computational biology, information retrieval, and social network analysis.

Most hierarchical clustering methods focus solely on optimizing a global objective or following a greedy linkage rule. However, such approaches provide no guarantees about how individual data points are treated locally. In particular, merge decisions may distort distances between nearby points in ways that disproportionately affect certain individuals. Because hierarchical structure propagates upward in the tree, early distortions influence all coarser resolutions. Thus, even if the overall hierarchy appears reasonable, individual points may experience large relative distortion compared to their local neighborhoods.

This limitation motivates our problem. We investigate hierarchical clustering under an \emph{individual fairness} requirement, i.e., similar individuals should be treated similarly \cite{dwork2011fairnessawareness}. Rather than optimizing a global objective alone, we ask whether there exists an ultrametric representation that respects local similarity constraints up to a bounded multiplicative slack. Our goal is to characterize the minimal slack required to reconcile local Lipschitz type constraints with global ultrametric structure.

We formalize this problem as a feasibility question over \emph{dominated ultrametrics}. Given a metric space $(V,d)$, we seek an ultrametric $u$ that (i) dominates $d$ (so that no pair is contracted), and (ii) satisfies multiplicative fairness constraints within $k$-nearest neighborhoods. The key quantity in our analysis is the minimal multiplicative slack $\alpha^\star(d,k)$ for which such an ultrametric exists.

Our main theoretical results, presented in Section~\ref{sec4}, characterize the feasibility landscape of individually fair hierarchical clustering. In Theorem~\ref{thm:local-threshold}, we identify a sharp local multiplicative threshold $\alpha^{\mathrm{mut}}_k(d)$ determined by scale heterogeneity within mutual $k$-nearest neighborhoods and prove that slack below this value renders fairness infeasible. In Theorem~\ref{thm:stability}, we establish Lipschitz stability of this threshold under bounded $\ell_\infty$ perturbations that preserve neighborhood identities. Finally, Theorem~\ref{th:mono} shows that the minimal feasible slack $\alpha^\star(d,k)$ is nondecreasing in the neighborhood parameter $k$, providing structural control over how strengthening local fairness constraints affects global feasibility.

Crucially, we establish that local conditions do not fully determine global realizability. In Proposition~\ref{prop:intrinsic-gap}, we construct metric families for which 
$\alpha^{\mathrm{mut}}_k(d)=1$, yet every dominating ultrametric incurs 
$\Theta(\log n)$ multiplicative distortion. 
This demonstrates an intrinsic separation between local similarity structure 
and global ultrametric geometry. 
Complementing this lower bound, Theorem~\ref{thm:sufficiency} shows that classical 
tree embedding techniques yield matching $O(\log n)$ sufficiency guarantees. Section~ \ref{sec5} presents a fairness constrained agglomerative clustering (FCAC) algorithm \ref{alg:fcac} that enforces dominance and local Lipschitz constraints during merging.

In section \ref{experi}, we empirically evaluate our framework on synthetic and real world datasets, observing sharp feasibility thresholds, saturation in neighborhood size, and dataset dependent distortion regimes consistent with the theory.

In summary, our contributions are as follows:
\begin{itemize}
    \item We formulate individually fair hierarchical clustering as a feasibility problem over dominated ultrametrics.
    \item We identify a sharp local feasibility threshold $\alpha^{\mathrm{mut}}_k(d)$ and prove that slack below this value renders fairness infeasible.
    \item We establish stability under bounded perturbations and prove that the minimal feasible slack $\alpha^\star(d,k)$ is monotone in $k$.
    \item We prove an intrinsic local global separation, showing that some metric families require $\Theta(\log n)$ distortion despite trivial local structure.
    \item We empirically validate the theoretical claims on five different synthetic and real world datasets.
\end{itemize}

\section{Related Work}

\paragraph{Individual fairness and clustering.}
Individual fairness requires that similar individuals be treated similarly, formalized via Lipschitz constraints with respect to a task specific metric~\cite{dwork2011fairnessawareness}. In recent years, few algorithms for individually fair clustering have been proposed and its scalable variants ~\cite{mahabadi2020individual,han2023approx,amagata,bateni2024scalablealgorithmindividuallyfair,binita,maity2025localsearchbasedindividuallyfair}. In contrast, we study the structural feasibility of individual fairness in hierarchical clustering.

\paragraph{Hierarchical clustering and tree embeddings.}
Hierarchical clustering and its ultrametric interpretation are classical~\cite{johnson1967hierarchical,murtagh2017algorithms,gan2020data}. The connection between dendrograms and tree metrics relates the problem to combinatorial and geometric structures~\cite{serre2002trees,papadimitriou1998combinatorial}. Classical probabilistic tree embedding results~\cite{bartal1996probabilistic,fakcharoenphol2004approximating,kannan1995tree,abraham2012using} show that any metric admits $O(\log n)$ distortion embeddings into trees; these bounds underpin our sufficiency guarantees and interpret fairness slack as embedding distortion. Objective based analyses such as the Dasgupta cost and its approximations~\cite{dasgupta2016cost,cohen2019hierarchical,moseley2023approximation} focus on global optimization rather than feasibility under local fairness constraints.

\paragraph{Constrained hierarchical clustering.}
Prior work studies hierarchical clustering under pairwise or triplet constraints and analyzes the behavior of greedy linkage rules~\cite{bilenko2004integrating,davidson2005agglomerative}. In contrast, we provide a geometric feasibility perspective on individual fairness, characterizing intrinsic distortion bounds imposed by ultrametric structure rather than optimizing under fixed constraints.

\section{Problem Setup}
In this section, we define all the notation we use throughout this paper. Let $V=\{1,\dots,n\}$ be a finite set equipped with a metric $d$.

\begin{definition}[Dendrogram \cite{gan2020data, murtagh1983survey}]
A dendrogram on $V$ is a rooted tree $T$ whose leaves are in bijection with $V$, together with a height function
$h:\mathrm{nodes}(T)\to\mathbb{R}_{\ge 0}$
such that:
(i) $h(\ell)=0$ for every leaf $\ell$, and
(ii) if $v$ is an ancestor of $u$ then $h(v)\ge h(u)$.
\end{definition}

For $i,j\in V$, let $\mathrm{LCA}(i,j)$ denote their lowest common ancestor.
The induced merge height is
\[
u(i,j):=h(\mathrm{LCA}(i,j)).
\]

\begin{definition}[Ultrametric \cite{gan2020data}]
A function $u:V\times V\to\mathbb{R}_{\ge0}$ is an ultrametric if
\begin{enumerate}
\item $u(i,i)=0$,
\item $u(i,j)=u(j,i)$,
\item $u(i,j)\le \max\{u(i,k),u(k,j)\}$ for all $i,j,k\in V$.
\end{enumerate}
\end{definition}

ultrametrics are in one-to-one correspondence with dendrograms (up to tree isomorphism). Let $\mathcal{U}$ denote the set of ultrametrics on $V$. For $k \ge 1$, let $N_k(i)$ denote the set of $k$ nearest neighbors of $i$
under $d$, excluding $i$ itself. Ties are broken using a fixed
arbitrary total order on $V$ to ensure determinism. For stability results, we assume strict separation at the $k$-th
distance boundary, i.e.,
\[
d(i,j_k) < d(i,j_{k+1}) \quad \forall i,
\]
where $j_k$ denotes the $k$-th \emph{nearest neighbor} of $i$.


\begin{definition}[Domination]
An ultrametric $u$ \emph{dominates} $d$ if
\[
u(i,j) \ge d(i,j) \quad \forall i,j\in V.
\]
\end{definition}

We adopt the notion of individual fairness introduced in \cite{dwork2011fairnessawareness}, and specialize it to ultrametrics.
\begin{definition}[Individually Fair Ultrametric]
Fix $k\ge1$, $\alpha\ge1$, and $\beta\ge0$.
Define the feasible set
\small
\[
\mathcal U_{IF}^{+}
=
\left\{
u \in \mathcal U :
\begin{cases}
u(i,j) \ge d(i,j) & \forall i,j, \\[4pt]
u(i,j) \le \alpha d(i,j)+\beta
& \forall i\in V,\ j\in N_k(i)
\end{cases}
\right\}.\]
\end{definition}

The first condition avoids collapsing distances, and the second enforces local Lipschitz fairness among neighbors. Symmetry $u $ ensures the constraint holds in both directions.

\begin{definition}[Feasibility]
An instance $(V,d,k,\alpha,\beta)$ is \emph{feasible}
if $\mathcal U_{IF}^{+}\neq\emptyset$.
\end{definition}

We define a \emph{minimal multiplicative slack} parameter measuring the smallest distortion needed for feasibility.

\begin{definition}[Minimal Slack]
For fixed $(V,d,k,\beta)$, define the minimal multiplicative slack
\[
\alpha^\star
:=
\inf\{\alpha \ge 1 : \mathcal U_{IF}^{+}\neq\emptyset\}.
\]
\end{definition}

For two metrics $d,d'$ on $V$, define
\[
\|d-d'\|_\infty
=
\max_{i,j} |d(i,j)-d'(i,j)|.
\]

We say $d'$ preserves $k$-nearest neighborhoods of $d$
if $N_k^{d'}(i)=N_k^{d}(i)$ for all $i$.

Given $u\in\mathcal U$, define the height weighted dissimilarity objective
\[
C(u)=\sum_{i<j} d(i,j)\,u(i,j).
\]

The individually fair hierarchical clustering problem is
\[
\min_{u\in\mathcal U_{IF}^{+}} C(u).
\]

\section{Feasibility Theory}
\label{sec4}
We analyze when a feasible ultrametric exists. For $k \ge 1$ and parameters $(\alpha,\beta)$, we ask:

For which $(\alpha,\beta)$ does there exist 
$u \in \mathcal U$ such that
\[d(i,j) \le u(i,j) \quad \forall i,j,
\]
and
\[
u(i,j) \le \alpha d(i,j) + \beta
\]
for all required neighbor pairs?

Throughout this section we first consider the multiplicative case $\beta=0$.

\paragraph{Mutual fairness.}
\label{mutualfair}
In the mutual formulation, the fairness constraint is imposed only on pairs satisfying
\[
j \in N_k(i)
\quad \text{and} \quad
i \in N_k(j).
\]
The necessary and sufficient conditions below are stated for this mutual version.

\begin{definition}[Local Mutual Heterogeneity Ratio]
Define
\[
\alpha^{\mathrm{mut}}_k(d)
=
\max_{i \in V}
\max_{\substack{
j,\ell \in N_k(i)\\
i \in N_k(j),\, i \in N_k(\ell)\\
d(i,j) < d(i,\ell)
}}
\frac{d(i,\ell)}{d(i,j)}.
\]
\end{definition}

This quantity measures the maximal scale separation inside any mutual
$k$ nearest neighbor star.

If $d$ is locally ultrametric at scale $k$,
then $\alpha^{\mathrm{mut}}_k(d)=1$.

\subsection{Necessary Slack Under Mutual Neighborhoods}
We now show that the local heterogeneity ratio provides a necessary lower bound on any feasible multiplicative slack. 
\begin{theorem}[Local Necessary Slack]
\label{thm:local-threshold}
Let $u \in \mathcal U$ satisfy:

\begin{enumerate}
\item Dominance: $d(x,y) \le u(x,y) \quad \forall x,y,$
\item Mutual fairness:
$u(x,y) \le \alpha d(x,y) \quad \text{whenever } x \in N_k(y) \text{ and } y \in N_k(x)$.

\end{enumerate}

Then necessarily $\alpha \ge \alpha^{\mathrm{mut}}_k(d).$
\end{theorem}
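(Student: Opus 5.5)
The plan is to fix a witness triple for $\alpha^{\mathrm{mut}}_k(d)$ and combine the three constraints on it into a single chain of inequalities. The second half of the chain, controlling the one pair that fairness does not reach, is the step I have not been able to close, so this is a plan with an explicit gap.

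\textbf{Step 1 (witness and available bounds).} Choose $i$ and $j,\ell\in N_k(i)$ with $i\in N_k(j)$, $i\in N_k(\ell)$ and $d(i,j)<d(i,\ell)$, attaining the maximum in the definition of $\alpha^{\mathrm{mut}}_k(d)$. The pairs $\{i,j\}$ and $\{i,\ell\}$ are both mutual, so hypothesis 2 gives $u(i,j)\le \alpha\, d(i,j)$ and $u(i,\ell)\le\alpha\, d(i,\ell)$. Dominance gives $u(i,\ell)\ge d(i,\ell)$.

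\textbf{Step 2 (ultrametric inequality on the triangle).} Applying the ultrametric inequality to $\{i,j,\ell\}$ gives
\[
d(i,\ell)\le u(i,\ell)\le \max\{u(i,j),u(j,\ell)\}.
\]
Split on which term attains the maximum.

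\emph{Case $u(j,\ell)\le u(i,j)$.} Then $d(i,\ell)\le u(i,j)\le \alpha\, d(i,j)$, so $\alpha\ge d(i,\ell)/d(i,j)=\alpha^{\mathrm{mut}}_k(d)$, as required.

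\emph{Case $u(j,\ell)> u(i,j)$.} Then $u(i,\ell)=u(j,\ell)$: in the dendrogram, $i$ and $j$ merge strictly below the point where $\ell$ joins them. Here the claimed bound needs an upper bound on $u(j,\ell)$ of the form $\alpha\, d(i,j)$, and the hypotheses supply none.

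\textbf{Step 3 (main obstacle: the unconstrained pair $\{j,\ell\}$).} Hypothesis 2 says nothing about $u(j,\ell)$ unless $j$ and $\ell$ are mutual neighbours of each other. I would try two routes in order.

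1. Show that, within a mutual $k$-star, $j$ and $\ell$ are mutual neighbours. Then fairness applies to $\{j,\ell\}$, and the triangle inequality $d(j,\ell)\le d(i,j)+d(i,\ell)$ feeds into the chain. Even so, this only yields a bound of the form $d(i,\ell)/\max\{d(i,j),d(j,\ell)\}$, which is not the ratio in the statement.

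2. Argue that the second case cannot occur at the maximizing triple, by swapping the roles of $j$ and $\ell$ or passing to a different centre $i$.

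Neither route, as far as I can see, reduces the second case to the first using only the stated hypotheses. It also appears that the case genuinely occurs. Let $u$ be the ultrametric in which $i$ and $j$ merge at height $d(i,j)$ and $\ell$ joins at height $d(i,\ell)$. Provided $d(i,\ell)\ge d(j,\ell)$, this $u$ dominates $d$ on the triple and satisfies mutual fairness with $\alpha=1$. That would contradict the theorem whenever $d(i,\ell)/d(i,j)>1$.

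So the write-up would rigorously prove the first case. For the second case it would record that $\{j,\ell\}$ must satisfy an additional assumption, such as fairness applying to it or some closure property of mutual stars, for the stated threshold to follow. I expect this missing control of $u(j,\ell)$ to be exactly where the argument either needs a strengthened hypothesis or fails.
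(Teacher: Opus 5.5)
You have found a real hole, and it cannot be filled: the statement is false as written, and the paper's own proof breaks at exactly the step you could not close. The paper argues by contradiction from $\alpha<d(i,\ell)/d(i,j)$. This gives $u(i,j)\le\alpha d(i,j)<d(i,\ell)\le u(i,\ell)$, which rules out your first case. The ultrametric inequality then yields $u(j,\ell)=u(i,\ell)$, which is your second case. The paper then concludes
\[
d(i,\ell)\le u(i,\ell)=u(j,\ell)\le\alpha\,d(i,j),
\]
but the last inequality has no justification. The hypotheses bound $u(j,\ell)$ only by $\alpha\,d(j,\ell)$, and only when $\{j,\ell\}$ is a mutual pair. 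Nothing relates $u(j,\ell)$ to $d(i,j)$. This is precisely the missing control of $u(j,\ell)$ you identified.

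A clean counterexample: take $V=\{i,j,\ell\}$ and $k=2$, so every pair is mutual, with $d(i,j)=1$ and $d(i,\ell)=d(j,\ell)=M>1$. Then $d$ is itself an ultrametric, so $u=d$ satisfies dominance and mutual fairness with $\alpha=1$, while $\alpha^{\mathrm{mut}}_2(d)=M$. The same example also contradicts the remark that a locally ultrametric $d$ has ratio $1$. Since all pairs here are mutual, no closure assumption on mutual stars rescues the stated ratio; the threshold itself has to change.

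Your own counterexample needs one correction. The proviso $d(i,\ell)\ge d(j,\ell)$ alone is not enough. If $\{j,\ell\}$ is mutual, fairness with $\alpha=1$ requires $u(j,\ell)=d(i,\ell)\le d(j,\ell)$, which together with your proviso forces $d(j,\ell)=d(i,\ell)$, as above. If $\{j,\ell\}$ is not mutual, you still have to exhibit a complete instance realizing that.

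What your first case and route~1 do establish is the correct local obstruction. Take any triangle whose three pairs are all mutual, and apply the ultrametric inequality to its longest side. This gives $\alpha\ge d_{\max}/d_{\mathrm{mid}}$, the longest side over the second-longest. The bound is attained on the triangle in isolation: merge the shortest pair at its own distance and attach the third point at height $d_{\max}$. By contrast, the ratio $d(i,\ell)/d(i,j)$ of two sides at a common centre, which defines $\alpha^{\mathrm{mut}}_k$, is not a valid lower bound.
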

\begin{proof}
Fix $i$ and mutual neighbors $j,\ell$ attaining 
$\alpha^{\mathrm{mut}}_k(d)$ with
$d(i,j) < d(i,\ell)$.
Assume for contradiction that 
\[
\alpha < \frac{d(i,\ell)}{d(i,j)}.
\]

By mutual fairness, $u(i,j) \le \alpha d(i,j),$ and by dominance, $u(i,\ell) \ge d(i,\ell).$  Thus $u(i,j) < u(i,\ell)$.

By the ultrametric inequality,
\[
u(j,\ell) \le \max\{u(j,i),u(i,\ell)\} = u(i,\ell).
\]
On the other hand,
\[
u(i,\ell) \le \max\{u(i,j),u(j,\ell)\}.
\]
Since $u(i,j) < u(i,\ell)$, it follows that 
$u(j,\ell) \ge u(i,\ell)$.
Hence $u(j,\ell)=u(i,\ell)$.

Therefore
\[
d(i,\ell) \le u(i,\ell) = u(j,\ell) \le \alpha d(i,j),
\]
contradicting the assumption.
\end{proof}

The theorem shows that local geometric imbalance among mutual neighbors forces a proportional amount of multiplicative slack.

\subsection{Stability of the Minimal Slack}

\begin{definition}[$\varepsilon$-Stable Neighborhoods]
We say that $d$ has $\varepsilon$-stable $k$-neighborhoods if
\[
d(i,j_k) + 2\varepsilon < d(i,j_{k+1})
\quad \forall i,
\]
where $j_k$ and $j_{k+1}$ denote the $k$-th and $(k+1)$-th
nearest neighbors of $i$ under $d$.
\end{definition}
This condition guarantees that any perturbation of $d$ of size at most $\varepsilon$ in $\ell_\infty$ norm preserves the $k$-nearest neighbor sets.

Since $d$ has $\varepsilon$-stable $k$-neighborhoods and $\|d-d'\|_\infty \le \varepsilon$, we have $N_k^{d'}(i)=N_k^{d}(i)$ for all $i$. Consequently, the maximization defining $\alpha_k^{\mathrm{mut}}$ is over the same index set for $d$ and $d'$.
\begin{theorem}[Stability Bound]
\label{thm:stability}
Suppose $\|d-d'\|_\infty \le \varepsilon$ and suppose
$d$ has $\varepsilon$-stable $k$ neighborhoods.
Define
\[
\delta := \min_{i} \min_{j\in N_k(i)} d(i,j).
\]
If $\varepsilon < \delta/2$, then
\[
\left|
\alpha_k^{\mathrm{mut}}(d')
-
\alpha_k^{\mathrm{mut}}(d)
\right|
\le
\frac{2\varepsilon\big(\alpha_k^{\mathrm{mut}}(d)+1\big)}{\delta}.
\]
\end{theorem}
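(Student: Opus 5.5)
The plan is to first reduce $\alpha_k^{\mathrm{mut}}$ to a maximum of simple ratios over a \emph{fixed} index set, and then bound the perturbation of each ratio. By the remark preceding the theorem, $\varepsilon$-stability together with $\|d-d'\|_\infty\le\varepsilon$ gives $N_k^{d'}(i)=N_k^{d}(i)$ for all $i$. Hence the mutual condition $i\in N_k(j)$, $i\in N_k(\ell)$ selects the same triples $(i,j,\ell)$ under $d$ and $d'$. Call this set of admissible triples $\mathcal T$.

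The one remaining dependence on the metric is the strict ordering constraint $d(i,j)<d(i,\ell)$, which can flip under perturbation. I would remove it by observing that, adopting the convention that an empty maximum equals $1$,
\[
\alpha_k^{\mathrm{mut}}(d)=\max\Big\{1,\ \max_{(i,j,\ell)\in\mathcal T} r_d(i,j,\ell)\Big\},\qquad r_d(i,j,\ell):=\frac{d(i,\ell)}{d(i,j)}.
\]
This holds because pairs with $d(i,j)\ge d(i,\ell)$ contribute ratios at most $1$, and since $\mathcal T$ contains both orderings of $(j,\ell)$, no ratio above $1$ is lost. The same identity holds for $d'$ with the same $\mathcal T$. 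Since $x\mapsto\max\{1,x\}$ is $1$-Lipschitz, it suffices to compare $M(d)=\max_{\mathcal T} r_d$ and $M(d')=\max_{\mathcal T} r_{d'}$.

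Next comes a per-triple estimate. Write $a=d(i,\ell)$, $b=d(i,j)$, and let $a',b'$ be the corresponding values under $d'$. Then
\[
\Big|\frac{a'}{b'}-\frac{a}{b}\Big|=\frac{|a'b-ab'|}{bb'}\le\frac{|a'-a|\,b+a\,|b-b'|}{bb'}\le\frac{\varepsilon(1+a/b)}{b'}.
\]
Since $j\in N_k(i)$, we have $b\ge\delta$, so $b'\ge b-\varepsilon\ge\delta-\varepsilon>\delta/2$ by the hypothesis $\varepsilon<\delta/2$. This yields
\[
|r_{d'}-r_d|\le\frac{2\varepsilon(1+r_d)}{\delta}.
\]
Here the definition of $\delta$ via $d$ rather than $d'$ is exactly what the $\varepsilon<\delta/2$ assumption handles.

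Finally, I would pass to the maxima. For the upper bound, each admissible triple has $r_d\le\alpha:=\alpha_k^{\mathrm{mut}}(d)$, so
\[
r_{d'}\le r_d+\frac{2\varepsilon(1+r_d)}{\delta}\le\alpha+\frac{2\varepsilon(1+\alpha)}{\delta},
\]
since the right-hand side is increasing in $r_d$. For the lower bound, take a triple attaining $M(d)$. If $M(d)=\alpha$, then $r_{d'}$ on that triple is at least $\alpha-2\varepsilon(1+\alpha)/\delta$. If instead $\alpha=1$ is attained only through the floor, the lower bound is trivial because $\alpha_k^{\mathrm{mut}}(d')\ge1$. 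Combining the two directions with the $1$-Lipschitz floor gives the claimed bound.

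The main obstacle I expect is the second step: justifying that the strict-ordering side condition and the degenerate empty-maximum case do not break the comparison. The symmetric reformulation together with the floor at $1$ is the device I would use to handle both.
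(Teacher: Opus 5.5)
Your proof is correct and follows the paper's route: neighborhood preservation, the per-triple estimate $\bigl|a'/b' - a/b\bigr| \le \varepsilon(a+b)/(bb')$ with $b' \ge \delta - \varepsilon \ge \delta/2$, and then passing to the maximum over a common index set. Your symmetrization $\alpha_k^{\mathrm{mut}}(d) = \max\{1, \max_{\mathcal T} r_d\}$ is a useful addition: it handles the possibility that the strict condition $d(i,j) < d(i,\ell)$ flips under perturbation, and it handles the empty-maximum case. The paper passes over both points when it asserts that the two maxima range over the same triples.
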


\begin{proof}

For $i \in V$, let $j_k$ and $j_{k+1}$ denote the $k$-th and $(k+1)$-th nearest
neighbors of $i$ under $d$.
Since, \[ \|d-d'\|_\infty \le \varepsilon, \]
\[d'(i,j_k) \le d(i,j_k)+\varepsilon, \qquad
d'(i,j_{k+1}) \ge d(i,j_{k+1})-\varepsilon.\]

By $\varepsilon$-stability, $d(i,j_k) + 2\varepsilon < d(i,j_{k+1}),$
hence $d'(i,j_k) < d'(i,j_{k+1}),$ so the $k$ nearest neighbor sets are identical for $d$ and $d'$, so \ref{mutualfair} preserved.

Fix a mutual triple $(i,j,\ell)$.
Define
\[
a = d(i,\ell), \qquad b = d(i,j),
\]
\[
a' = d'(i,\ell), \qquad b' = d'(i,j).
\]

Then $|a-a'| \le \varepsilon, \qquad |b-b'| \le \varepsilon.$

We compare the ratios, $$\frac{a'}{b'} - \frac{a}{b}
=
\frac{ab' - a'b}{bb'}
=
\frac{a(b'-b) + b(a-a')}{bb'}.$$

Taking absolute values, $$\left|
\frac{a'}{b'} - \frac{a}{b}
\right|
\le
\frac{a|b'-b| + b|a-a'|}{bb'}
\le
\frac{\varepsilon(a+b)}{bb'}.$$

By definition of $\alpha_k^{\mathrm{mut}}(d)$, $a \le \alpha_k^{\mathrm{mut}}(d)\, b.$ Hence $a+b \le (\alpha_k^{\mathrm{mut}}(d)+1)b.$

Thus $$\left|
\frac{a'}{b'} - \frac{a}{b}
\right|
\le
\frac{\varepsilon(\alpha_k^{\mathrm{mut}}(d)+1)b}{bb'}
=
\frac{\varepsilon(\alpha_k^{\mathrm{mut}}(d)+1)}{b'}.$$

By definition of $\delta,
b \ge \delta.$ Since $|b-b'|\le \varepsilon,$so, $$
b' \ge b-\varepsilon \ge \delta-\varepsilon.
$$

Therefore, $$\left|
\frac{a'}{b'} - \frac{a}{b}
\right|
\le
\frac{\varepsilon(\alpha_k^{\mathrm{mut}}(d)+1)}{\delta-\varepsilon}.$$

If $\varepsilon < \delta/2$, then $\delta-\varepsilon \ge \frac{\delta}{2}.$

Hence,
$$\left|
\frac{a'}{b'} - \frac{a}{b}
\right|
\le
\frac{2\varepsilon(\alpha_k^{\mathrm{mut}}(d)+1)}{\delta}.$$

The above bound holds for every mutual triple $(i,j,\ell)$; that is,
\[
\left|
\frac{d'(i,\ell)}{d'(i,j)}
-
\frac{d(i,\ell)}{d(i,j)}
\right|
\le C
\quad \text{for all mutual triples}.
\]
Since the mutual $k$-NN structure is preserved, the same collection of triples is used in the definition of both $\alpha_k^{\mathrm{mut}}(d)$ and $\alpha_k^{\mathrm{mut}}(d')$. Taking the maximum over all such triples yields
$$\left|
\alpha_k^{\mathrm{mut}}(d')
-
\alpha_k^{\mathrm{mut}}(d)
\right|
\le C.$$

\end{proof}

The bound shows that $\alpha_k^{\mathrm{mut}}$ varies Lipschitz-continuously with respect to $\|\cdot\|_\infty$ perturbations, provided neighborhoods remain stable.

\subsection{A Constructive Upper Bound}

The necessary condition above shows that local heterogeneity
imposes a lower bound on the required slack.
We now prove a complementary global upper bound;
for every metric, sufficiently large multiplicative slack
always guarantees feasibility.

\begin{theorem}[Constructive Sufficiency]
\label{thm:sufficiency}
For any finite metric $(V,d)$ on $n$ points and any $k \ge 1$,
there exists a dominated $\alpha$-fair ultrametric with
\[
\alpha = O(\log n).
\]
\end{theorem}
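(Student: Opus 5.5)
The plan is to build the required ultrametric from a hierarchically well-separated tree (HST), in the style of the probabilistic tree embeddings of \cite{bartal1996probabilistic,fakcharoenphol2004approximating}, and then to fix one tree from the distribution. First normalize so that $\min_{x\neq y} d(x,y)=1$, and let $\Delta=\max_{x,y}d(x,y)$. Pick a random $\beta\in[1,2)$ and a uniformly random permutation $\pi$ of $V$. For each level $t=\lceil\log_2\Delta\rceil,\dots,0$, refine the current partition: every point joins the cluster of the first center in the order $\pi$ that lies within distance $\beta 2^{t-1}$ of it, and clusters are intersected with those of the previous level. This yields a laminar family, and hence a dendrogram. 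I will assign height $2^{t+1}$ to each internal node created at level $t$, and height $0$ to the leaves.

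\emph{Domination.} The merge height $u(x,y)$ is the height of the lowest common cluster of $x$ and $y$. If $x,y$ first separate below level $t$, they lie in a common level-$t$ cluster of diameter at most $2\cdot\beta 2^{t-1}\le 2^{t+1}$. Hence $u(x,y)\ge d(x,y)$ holds deterministically, for every outcome of the randomness. Heights are monotone along root-to-leaf paths, so $u\in\mathcal U$. This part is routine.

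\emph{Stretch on neighbour pairs.} The FRT analysis gives $\mathbb E[u(x,y)]\le O(\log n)\,d(x,y)$ for every pair, and in particular for every fairness pair with $j\in N_k(i)$. I will restrict attention to the at most $nk$ constrained pairs, which is exactly the set appearing in $\mathcal U_{IF}^{+}$, and then select a single tree from the distribution.

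\emph{Main obstacle: selecting one tree.} Linearity of expectation yields a tree $T$ with
\[
\sum_{i}\sum_{j\in N_k(i)} \frac{u_T(i,j)}{d(i,j)}\le O(\log n)\cdot nk.
\]
Turning this averaged bound into the per-pair bound $u(i,j)\le\alpha d(i,j)$ is the delicate step. My first attempt will be to derandomize the choices of $\beta$ and $\pi$ by the method of conditional expectations, using a pessimistic estimator. The estimator is the sum of the normalized probabilities that each neighbour pair is cut at level $t$, weighted by $2^{t+1}/d(i,j)$, so that every pair is charged only at scales where a ball of radius $2^{t}$ around it is actually separated. The goal is to keep the per-pair loss at $O(\log n)$ by exploiting the fact that neighbour pairs are local: every constrained pair $(i,j)$ has $d(i,j)$ at most the $k$-th neighbour radius of $i$. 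The natural bridge is a padded-decomposition argument in which the padding radius at $i$ is taken comparable to $d(i,j_k)$.

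I expect this selection step to be the crux, and I am not confident it goes through for a worst-case metric. For a cycle on $n$ points with $k=1$, the top split of any dendrogram separates some adjacent pair at height about $n/2$. If the per-pair reading cannot be achieved, the fallback is to state the guarantee for the summed constraint displayed above, or in expectation over the tree distribution. That form matches the $\Theta(\log n)$ lower bound of Proposition~\ref{prop:intrinsic-gap}.
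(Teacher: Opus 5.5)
The selection step you flag is a genuine gap, and no derandomization or padded-decomposition refinement can close it. The per-pair statement is false, and your cycle remark, made precise, is a counterexample. The paper's proof does not get past this point either. It quotes FRT as producing a \emph{single} ultrametric with $d(i,j)\le u(i,j)\le C\log n\cdot d(i,j)$ for all pairs. FRT only gives a distribution over dominating HSTs with $\mathbb{E}[u(i,j)]\le O(\log n)\,d(i,j)$, so the proof silently reads an expected-stretch bound as a worst-case one. The main-text sketch also uses ``ultrametrics are tree metrics'' in the wrong direction. Your suspicion is therefore correct, and the productive move is to prove the obstruction rather than attempt the derandomization.

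To make the counterexample airtight, argue as follows.
\begin{itemize}
\item Let $h^\ast=\max_{x,y}u(x,y)$. Domination gives $h^\ast\ge\operatorname{diam}(d)$, and by the ultrametric inequality, $x\sim y\iff u(x,y)<h^\ast$ is an equivalence relation with at least two classes.
\item If the fairness pairs of length at most $r$ form a connected graph on $V$, one of them joins two classes. Then $u(i,j)=h^\ast$ for that pair, so $\alpha\ge\operatorname{diam}(d)/r$.
\item Use $C_n$ with $k=2$ (or any $k\ge2$) rather than $k=1$. Every cycle edge is then a \emph{mutual} fairness pair regardless of tie-breaking, so $\alpha^\star\ge\lfloor n/2\rfloor$ under both the one-sided and the mutual reading.
\end{itemize}
With $k=1$ your example is fragile. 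For some tie-breaking orders the $1$-NN graph of $C_n$ splits into paths of three consecutive vertices (when $3\mid n$), and then $\alpha^\star\le 2$. Under the mutual reading, $k=1$ is always feasible with $\alpha=1$: mutual $1$-NN pairs form a matching, so merge each matched pair at height $d(i,j)$ and then everything at $\operatorname{diam}(d)$.

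What is provable is your fallback. Your construction should start one scale higher so the top cluster is all of $V$; as written, the top radius can be as small as $\Delta/2$. With that fix it gives a distribution over dominating ultrametrics with $\mathbb{E}[u(i,j)]\le O(\log n)\,d(i,j)$ for every pair. Hence some single tree has average stretch $O(\log n)$ over the at most $nk$ fairness pairs.

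A per-pair $O(\log n)$ bound needs extra hypotheses, e.g.\ $\operatorname{diam}(d)=O(\log n)\cdot\min_{\text{fair pairs}}d(i,j)$. In that case the one-level ultrametric $u\equiv\operatorname{diam}(d)$ off the diagonal already works. This is why the expander family of Proposition~\ref{prop:intrinsic-gap} is consistent with $\Theta(\log n)$.
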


\begin{proof}[Proof sketch]
The result follows from classical tree-embedding theory.
By the theorem of ~\cite{fakcharoenphol2004approximating},
every $n$-point metric admits a dominating tree metric
with distortion $O(\log n)$.
Since ultrametrics are tree metrics, this yields an ultrametric $u$
satisfying
\[
d(i,j) \le u(i,j) \le O(\log n)\, d(i,j)
\quad \forall i,j.
\]
In particular, the multiplicative fairness constraint holds
for all required neighbor pairs.
Full details appear in Appendix~\ref{appendix:a}.
\end{proof}

While the $O(\log n)$ bound follows from classical tree-embedding theory, 
its role here is conceptually different. 
In standard embedding theory, distortion measures geometric approximation. 
In our setting, distortion becomes the minimal fairness slack required 
to reconcile local Lipschitz constraints with hierarchical structure. 
Thus, classical metric distortion directly quantifies the intrinsic 
cost of fairness in hierarchical representations.

\subsection{Intrinsic Gap Between Local and Global Feasibility.}

The local obstruction $\alpha_k^{\mathrm{mut}}(d)$ does not
fully characterize the minimal feasible global slack.
We show that there exist metrics with trivial local structure
yet requiring logarithmic global distortion.

\begin{Proposition}[Intrinsic Gap]
\label{prop:intrinsic-gap}
There exists a family of $n$-point metrics $(V_n,d_n)$
and a fixed constant $k$ such that
\[
\alpha_k^{\mathrm{mut}}(d_n)=1,
\qquad
\alpha^\star(d_n)=\Theta(\log n).
\]
\end{Proposition}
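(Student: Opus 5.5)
The plan is to realize the gap with the shortest-path metric of a bounded-degree regular graph. In such a graph every point's $k$ nearest neighbors all sit at the same distance, so the local ratio is trivially $1$. At the same time the graph has logarithmic diameter, and the ultrametric inequality turns the local fairness constraints into a global upper bound on $u$.

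\emph{Construction.} Fix an integer $D\ge 3$ and let $G_n$ be a connected $D$-regular graph on $n$ vertices; a $D$-regular expander family is convenient for concreteness, but connectivity is all we need. Let $d_n$ be the shortest-path metric of $G_n$ and set $k=D$. Each vertex $i$ has exactly $D$ vertices at distance $1$ and every other vertex at distance $\ge 2$. Hence $N_k(i)$ is exactly the graph neighborhood of $i$, independent of tie-breaking, and the strict separation $d(i,j_k)=1<2\le d(i,j_{k+1})$ holds. Adjacency is symmetric, so every edge of $G_n$ is a mutual $k$-NN pair and every mutual pair is an edge. Within any mutual star all distances equal $1$, so the index set in the definition of $\alpha^{\mathrm{mut}}_k$ (which requires $d(i,j)<d(i,\ell)$) is empty. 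With the convention that the local threshold is $1$ when no strict inequality occurs, this gives $\alpha^{\mathrm{mut}}_k(d_n)=1$.

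\emph{Lower bound.} Let $u$ be a dominating ultrametric satisfying $u(x,y)\le\alpha d_n(x,y)=\alpha$ on every edge $xy$ of $G_n$. For arbitrary $x,y$, take a path $x=v_0,v_1,\dots,v_m=y$ in $G_n$ and apply the ultrametric inequality repeatedly:
\[
u(x,y)\le\max_t u(v_t,v_{t+1})\le\alpha .
\]
Combined with dominance this yields $\alpha\ge \max_{x,y}d_n(x,y)=\mathrm{diam}(G_n)$. A $D$-regular graph has at most $1+D\sum_{r<R}(D-1)^r$ vertices within radius $R$ of any vertex, so $\mathrm{diam}(G_n)\ge \log_{D-1}(n/D)-O(1)=\Omega(\log n)$. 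Therefore $\alpha^\star(d_n)=\Omega(\log n)$.

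\emph{Upper bound.} Theorem~\ref{thm:sufficiency} gives a dominating ultrametric with $u\le O(\log n)\,d_n$ on all pairs, in particular on all mutual neighbor pairs, so $\alpha^\star(d_n)=O(\log n)$. Alternatively, for an expander $\mathrm{diam}(G_n)=O(\log n)$, and the constant ultrametric $u\equiv\mathrm{diam}(G_n)$ off the diagonal already works, which gives a self-contained argument. The only delicate point I expect is the degenerate case of the definition of $\alpha^{\mathrm{mut}}_k$ when its index set is empty. I would handle it by stating the convention explicitly, or by perturbing edge weights to $1+\eta_{xy}$ with tiny distinct $\eta$. That perturbation gives $\alpha^{\mathrm{mut}}_k=1+O(\eta)\to 1$, and by Theorem~\ref{thm:stability} it leaves the diameter argument intact up to a $(1+\eta)$ factor.
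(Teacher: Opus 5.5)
Your proof is correct along the expander-plus-constant-ultrametric route, and it differs genuinely from the paper's. For the lower bound, the paper cites the classical $\Omega(\log n)$ distortion bound for embedding expanders into arbitrary tree metrics. It then needs an extra ``edge-level strengthening'' (average edge stretch is $\Omega(\log n)$) to pass from all-pairs distortion to the constrained pairs, which are only the edges. Its upper bound comes from Theorem~\ref{thm:sufficiency}.

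You instead chain the ultrametric inequality along a path. This turns the edge constraints into $u\le\alpha$ on every pair, so dominance forces $\alpha\ge\mathrm{diam}(G_n)$, and the Moore bound finishes. Combined with the constant ultrametric at height $\mathrm{diam}(G_n)$, this gives the exact identity $\alpha^\star(d_n)=\mathrm{diam}(G_n)$ for every connected $D$-regular graph with $k=D$. The proposition then reduces to expanders having logarithmic diameter.

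Your route is elementary, self-contained and sharper. It uses the rigidity of ultrametrics directly, which is why bounded degree alone already gives the lower bound. The paper's citation works in the much larger class of tree metrics, where expansion is genuinely needed and the transfer to edge slack is not free.

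You also handle the empty index set in $\alpha^{\mathrm{mut}}_k$ more carefully than the paper, which simply evaluates the ratio as $1/1$. Keep the explicit convention. The perturbation alternative only yields $1+O(\eta)$, not $1$, and Theorem~\ref{thm:stability} controls $\alpha^{\mathrm{mut}}_k$, not $\alpha^\star$ or the diameter.

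Drop the first upper-bound route and the remark that connectivity is all you need. Theorem~\ref{thm:sufficiency} overstates FRT. FRT gives a \emph{distribution} over dominating HSTs with $O(\log n)$ expected stretch per pair, not a single dominating ultrametric with worst-case $O(\log n)$ stretch.

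Your own identity shows the theorem fails as stated. The prism $C_{n/2}\times K_2$ with $k=3$ (or the cycle $C_n$ with $k=2$) has $\alpha^{\mathrm{mut}}_k=1$ but $\alpha^\star=\mathrm{diam}=\Theta(n)$. So the logarithmic diameter of expanders is exactly what the $O(\log n)$ half requires. The paper's proof leans on the same faulty citation, and its conclusion holds only for the reason you give in your alternative argument.
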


\begin{proof}[Proof sketch]
Let $G_n$ be a constant-degree expander graph and
let $d_n$ be its shortest-path metric.
For $k$ equal to the degree,
all mutual $k$-nearest neighbors lie at distance $1$,
so $\alpha_k^{\mathrm{mut}}(d_n)=1$.

However, it is classical that any embedding of an expander
into a tree metric incurs distortion $\Omega(\log n)$
\cite{linial1995geometry,bartal1996probabilistic}.
Since ultrametrics are tree metrics and must dominate $d_n$,
any feasible ultrametric must incur multiplicative slack
$\Omega(\log n)$.

The full construction and analysis appear in Appendix \ref{appendix:c}.
\end{proof}


We now establish a structural property of the minimal slack 
as a function of the neighborhood parameter $k$.

\begin{theorem}[Monotonicity in $k$]
\label{th:mono}
Let $d$ be a fixed metric on $V$. 
If $1 \le k_1 \le k_2$, then
\[
\alpha^\star(d,k_1) \le \alpha^\star(d,k_2).
\]
Equivalently, the minimal multiplicative slack required for 
feasibility is nondecreasing in the neighborhood size.
\end{theorem}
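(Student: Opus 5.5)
The plan is a constraint-inclusion argument: enlarging $k$ only adds fairness constraints, so the feasible set can only shrink and the infimum defining $\alpha^\star$ can only grow. Throughout, $\beta$ is fixed and $\alpha^\star(d,k)=\inf\{\alpha\ge 1:\mathcal U_{IF}^{+}(k,\alpha)\neq\emptyset\}$, where we write $\mathcal U_{IF}^{+}(k,\alpha)$ to make the dependence on $k$ and $\alpha$ explicit. The same argument works whether the fairness pairs are the one-sided ones $\{(i,j): j\in N_k(i)\}$ or the mutual ones $\{(i,j): j\in N_k(i),\ i\in N_k(j)\}$ from Section~\ref{mutualfair}.

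The first step is to show that neighborhoods are nested: $N_{k_1}(i)\subseteq N_{k_2}(i)$ for every $i$ whenever $k_1\le k_2$. Ties are broken by a fixed total order on $V$, so for each $i$ there is a single well-defined ranking $j_1,j_2,\dots,j_{n-1}$ of $V\setminus\{i\}$. Then $N_k(i)=\{j_1,\dots,j_k\}$ is simply a prefix of this ranking, and prefixes are nested.

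Let $P_k$ denote the set of constrained pairs at level $k$. Nestedness gives $P_{k_1}\subseteq P_{k_2}$. In the mutual case this holds because both conditions $j\in N_k(i)$ and $i\in N_k(j)$ are preserved when $k$ increases.

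Next we show that feasible sets are nested: $\mathcal U_{IF}^{+}(k_2,\alpha)\subseteq \mathcal U_{IF}^{+}(k_1,\alpha)$ for every $\alpha$. Take any $u$ feasible at level $k_2$. It is an ultrametric and dominates $d$, and these two properties do not depend on $k$. It also satisfies $u(i,j)\le\alpha d(i,j)+\beta$ on all of $P_{k_2}$, hence in particular on $P_{k_1}$. So $u$ is feasible at level $k_1$.

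Finally we compare the infima. Define $A_k=\{\alpha\ge1:\mathcal U_{IF}^{+}(k,\alpha)\neq\emptyset\}$. The previous step gives $A_{k_2}\subseteq A_{k_1}$, so $\inf A_{k_1}\le\inf A_{k_2}$. If $A_{k_2}$ is empty, we use the convention $\inf\emptyset=+\infty$ and the inequality still holds. In fact Theorem~\ref{thm:sufficiency} shows $A_k$ is nonempty for every $k$, so both quantities are finite.

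The only delicate point is the first step. If ties were broken inconsistently across different values of $k$, nestedness could fail, so the argument depends on the fixed deterministic tie-break specified in the problem setup. I would state this explicitly in the proof; everything after it is immediate set inclusion.
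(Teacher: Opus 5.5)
Your proof is correct and is the same constraint-inclusion argument the paper gives: $N_{k_1}(i)\subseteq N_{k_2}(i)$, hence $\mathcal U_{IF}^{+}(k_2,\alpha)\subseteq\mathcal U_{IF}^{+}(k_1,\alpha)$ for every $\alpha$, so the infimum can only grow; you add care the paper omits about consistent tie-breaking, the mutual variant, and $\inf\emptyset=+\infty$. One caveat on your closing remark: finiteness of $\alpha^\star$ follows directly from the ultrametric equal to $\max d$ on all distinct pairs, which is feasible once $\alpha\ge \max d/\min_{i\neq j}d(i,j)$, and you should not cite Theorem~\ref{thm:sufficiency} for it, because its $O(\log n)$ bound fails in general (on the $n$-cycle with $k=2$, dominance at an antipodal pair plus the ultrametric inequality along either arc forces some edge to have $u\ge\lfloor n/2\rfloor$).
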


\begin{proof}
For each $k$, let $\mathcal U_{IF}^+(k,\alpha)$ denote the set of 
dominated ultrametrics satisfying the fairness constraints 
with neighborhood size $k$ and multiplicative slack $\alpha$.

Observe that if $k_1 \le k_2$, then for every $i \in V$,
\[
N_{k_1}(i) \subseteq N_{k_2}(i).
\]
Hence the fairness constraints imposed under $k_2$ include 
all constraints imposed under $k_1$, and possibly additional ones.

Therefore, for every $\alpha$,
\[
\mathcal U_{IF}^+(k_2,\alpha)
\subseteq
\mathcal U_{IF}^+(k_1,\alpha).
\]
In particular, if some $\alpha$ is feasible for $k_2$, 
it is also feasible for $k_1$. Taking the infimum over feasible $\alpha$ values yields
\[
\alpha^\star(d,k_1)
\le
\alpha^\star(d,k_2),
\]
establishing monotonicity.
\end{proof}

\section{Fairness Constrained Agglomerative Clustering (FCAC)}
\label{sec5}
The feasibility theory characterizes when individually fair
ultrametrics exist. We now present an algorithmic procedure
for constructing such ultrametrics when feasible.  FCAC, algorithm \ref{alg:fcac}, modifies classical agglomerative clustering by enforcing
fairness during merge selection while separating merge ordering
from merge height assignment to ensure dominance. Note that FCAC enforces one sided neighborhood fairness (for all $j \in N_k(i)$), whereas Theorem~1 is stated for the mutual variant. The one sided formulation imposes a superset of the local Lipschitz constraints, so the same local heterogeneity ratios provide necessary slack lower bounds.

Let $\ell(A,B)$ denote any linkage score
(single, complete, average, etc.), where linkage determines merge ordering only. Merge height is assigned as $h(A,B)
=
\max\!\left(
\ell(A,B),
\max_{i\in A,\, j\in B} d(i,j)
\right),$ ensuring $h(A,B) \ge \max_{i\in A,j\in B} d(i,j).$

At each step, \textsc{FindFairMerge}, algorithm \ref{alg:fair_merge}, 
returns the smallest linkage ordered merge satisfying the fairness constraint. If none exists, the algorithm returns \textsc{Infeasible}.  Accepted merges assign height $h(A,B)$ to all
$(i,j)\in A\times B$ and update clusters.

\begin{algorithm}[t]
\caption{\textsc{FCAC}}
\label{alg:fcac}
\begin{algorithmic}[1]
\REQUIRE Metric space $(V,d)$, neighborhoods $\{N_k(i)\}$,
fairness parameters $\alpha,\beta$, linkage function $\ell(\cdot,\cdot)$
\ENSURE Ultrametric $u$ satisfying dominance and fairness, or \textsc{Infeasible}

\STATE Initialize clusters $\mathcal{C} \gets \{\{i\} : i \in V\}$
\STATE Initialize $u(i,i) \gets 0$ for all $i \in V$

\WHILE{$|\mathcal{C}| > 1$}

    \STATE $(A^\ast,B^\ast,h^\ast) \gets$
           \textsc{FindFairMerge}
           $(\mathcal{C},\ell,d,\{N_k(i)\},\alpha,\beta)$

    \IF{$(A^\ast,B^\ast) = \textsc{Infeasible}$}
        \STATE \textbf{return} \textsc{Infeasible}
    \ENDIF

    \FOR{each $i \in A^\ast$}
        \FOR{each $j \in B^\ast$}
            \STATE $u(i,j) \gets h^\ast$
        \ENDFOR
    \ENDFOR

    \STATE $\mathcal{C} \gets
           (\mathcal{C} \setminus \{A^\ast,B^\ast\})
           \cup \{A^\ast \cup B^\ast\}$

\ENDWHILE

\STATE \textbf{return} $u$
\end{algorithmic}
\end{algorithm}

\begin{algorithm}[t]
\caption{\textsc{FindFairMerge}}
\label{alg:fair_merge}
\begin{algorithmic}[1]
\REQUIRE Current clusters $\mathcal C$, linkage $\ell(\cdot,\cdot)$,
metric $d$, neighborhoods $\{N_k(i)\}$, parameters $\alpha,\beta$
\ENSURE Fair merge pair $(A,B,h)$ or \textsc{Infeasible}

\STATE Sort all unordered pairs $(A,B)$ in increasing order of $\ell(A,B)$
\COMMENT{Linkage used only for ordering}

\FOR{each pair $(A,B)$ in sorted order}

    \STATE $h_{\min} \gets \max_{i\in A,\,j\in B} d(i,j)$
    \STATE $h \gets \max\big(\ell(A,B),\, h_{\min}\big)$
    \COMMENT{Dominance merge height}

    \STATE feasible $\gets$ TRUE

    \FOR{each $i \in A \cup B$}
        \FOR{each $j \in N_k(i)$}

            \IF{$i$ and $j$ are not yet in the same cluster}

                \IF{$h > \alpha d(i,j) + \beta$}
                    \STATE feasible $\gets$ FALSE
                    \STATE \textbf{break}
                \ENDIF

            \ENDIF

        \ENDFOR

        \IF{not feasible}
            \STATE \textbf{break}
        \ENDIF

    \ENDFOR

    \IF{feasible}
        \RETURN $(A,B,h)$
    \ENDIF

\ENDFOR

\STATE \textbf{return} \textsc{Infeasible}
\end{algorithmic}
\end{algorithm}

We first establish that the dominance merge heights are monotone, ensuring the resulting function is an ultrametric.

\begin{lemma}[Dominance Height Monotonicity]
Let $h_t$ denote the merge height at iteration $t$ of FCAC.
Then
\[
h_1 \le h_2 \le \dots \le h_{n-1}.
\]
\end{lemma}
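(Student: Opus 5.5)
The plan is induction on $t$: assume $h_1\le\dots\le h_t$ and show $h_{t+1}\ge h_t$. Let $(A,B)$ be the pair merged at step $t$, so $h_t=\max(\ell(A,B),D(A,B))$ with $D(X,Y):=\max_{i\in X,j\in Y}d(i,j)$. Let $(A',B')$ be the pair merged at step $t+1$. I split into two cases: (a) neither $A'$ nor $B'$ equals $A\cup B$, and (b) one of them, say $A'$, equals $A\cup B$.

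\emph{Case (b).} Write $C:=B'$. The dominance term gives $D(A\cup B,C)=\max(D(A,C),D(B,C))$. For the linkage term I use the reducibility (Lance--Williams) inequality satisfied by single, complete and average linkage:
\[
\ell(A\cup B,C)\ \ge\ \min\{\ell(A,C),\ell(B,C)\}.
\]
The pairs $(A,C)$ and $(B,C)$ were both present at step $t$, and the step-$t$ sort placed $(A,B)$ before the pair that was actually returned. I will argue that whichever of $(A,C)$, $(B,C)$ attains the minimum was not returned at step $t$. If it was rejected as unfair, its height already exceeded $\alpha d(i,j)+\beta$ for some pending neighbour pair. Merging $C$ with the larger cluster $A\cup B$ keeps every such pending pair pending, since no member of $C$ has merged with anything new. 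It also does not decrease $D$. Hence $h(A\cup B,C)\ge h_t$ in either subcase: either linkage ordering gives $\ell(A\cup B,C)\ge \ell(A,B)$, or the inherited dominance term is at least $D(A,B)$ after combining with the diameter invariant below.

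\emph{Case (a).} The pair $(A',B')$ existed at step $t$ with the same $\ell$ and the same $D$, hence the same height. If $\ell(A',B')\ge \ell(A,B)$ it was sorted after $(A,B)$, and I need $\max(\ell(A',B'),D(A',B'))\ge h_t$. The only way this can fail is $D(A,B)>\max(\ell(A',B'),D(A',B'))$. To exclude that, I carry a \emph{diameter invariant} through the induction: every current cluster $X$ formed at step $s$ satisfies $\mathrm{diam}(X)\le h_s$. I then compare $D(A,B)$ against linkage values of pairs still available, using that $\ell\le D$ for single and average linkage and $\ell=D$ for complete linkage.

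The main obstacle is the subcase of (a) where $(A',B')$ was sorted \emph{before} $(A,B)$ at step $t$ but was rejected as unfair, and became fair only because the merge of $A\cup B$ removed a pending neighbour constraint. \textsc{FindFairMerge} only ever drops constraints (pairs already co-clustered) and never changes the height of an untouched pair. So this is exactly the situation in which a lower height could reappear. My first attempt is to show that any constraint $(i,j)$ removed by merging $A\cup B$ has $i\in A'\cup B'$ only if $A'$ or $B'$ meets $A\cup B$. That would push the situation back into case (b) and make this subcase vacuous. If that argument does not close, I would examine it directly with the check $h>\alpha d(i,j)+\beta$ against the step-$t$ acceptance of $(A,B)$.
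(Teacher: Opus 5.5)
\textbf{There is a genuine gap, and it cannot be closed: for single and average linkage the lemma is false.} For these linkages $\ell\le D$, so the assigned height is $h(A,B)=D(A,B)$, the complete-linkage distance. Meanwhile \textsc{FindFairMerge} orders candidates by a different score $\ell$, so nothing forces the $D$-values of successively selected pairs to increase.

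The step that fails is the one in case (a) that you treat as routine. The configuration $D(A,B)>\max(\ell(A',B'),D(A',B'))$ cannot be excluded by a diameter invariant or by anything else, because it actually occurs. Take single linkage with $k=1$, $\alpha=2$, $\beta=0$ and the points $0,1,2.1,100,101.5$ on the line. Every fairness check passes; the tightest is $2.1\le 2\cdot 1.1$. The merge heights are $1,\,2.1,\,1.5,\,101.5$. At step $3$ the pair $(\{100\},\{101.5\})$ has $\ell=1.5\ge 1.1=\ell(A,B)$ but height $1.5<2.1=h_2$. Replacing $2.1$ by $2.05$ and $101.5$ by $101.6$ gives the same failure for average linkage.

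Case (b) fails as well. Use the same parameters and single linkage on $x_1=(0,0)$, $x_2=(1,0)$, $x_3=(2.1,0)$, $c=(1.05,1.2)$. The heights are $1,\,2.1,\,\approx 1.59$. Here $\ell(A\cup B,C)\approx 1.20\ge 1.1=\ell(A,B)$, so ``linkage ordering gives $\ell(A\cup B,C)\ge\ell(A,B)$'' does not yield $h\ge h_t$. The output even violates the ultrametric inequality: $u(x_1,x_3)=2.1>\max(u(x_1,c),u(c,x_3))\approx 1.59$.

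The paper's own proof has the same defect. It asserts that every other candidate at step $t$ has height at least $h_t$ ``since otherwise it would have been selected'', which conflates linkage order with height order.

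\textbf{What is true, and what your case split does prove, is the lemma for complete linkage}, where $h=\ell=D$.
\begin{itemize}
\item In case (a), the subcase you single out as the main obstacle is vacuous. For $(A',B')$ disjoint from $A\cup B$, merging $A$ and $B$ changes neither its height nor its set of pending constraints ($i\in A'\cup B'$, $j\in N_k(i)$ not co-clustered with $i$). So a pair rejected at step $t$ is rejected again. A pair sorted after $(A,B)$ has $h=\ell\ge\ell(A,B)=h_t$.
\item In case (b), your claim that the merge keeps every pending pair pending is wrong for a constraint $(i,j)$ with $i\in A$, $j\in B$, which the merge discharges. However, such a constraint cannot have blocked $(A,C)$. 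Accepting $(A,B)$ certified $h_t\le\alpha d(i,j)+\beta$, whereas blocking $(A,C)$ would need $\alpha d(i,j)+\beta<D(A,C)<h_t$.
\end{itemize}
Hence if $D(A\cup B,C)<h_t$, then $(A,C)$ was sorted before $(A,B)$ and rejected by a constraint that is still pending. Since $D(A\cup B,C)\ge D(A,C)$, that constraint keeps $(A\cup B,C)$ infeasible, a contradiction.

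So the statement must be restricted to complete linkage, or more generally to linkages with $\ell\ge D$ that do not decrease when a cluster is enlarged, or else the height rule must be changed. As stated, for ``any linkage'', it cannot be proved.
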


\begin{proof}
At iteration $t$, suppose clusters $A_t,B_t$ are merged at height
\[
h_t
=
\max\!\Big(\ell(A_t,B_t),
\max_{i\in A_t,\,j\in B_t} d(i,j)\Big).
\]

By construction of \textsc{FindFairMerge}, all candidate pairs
$(A,B)$ are considered in nondecreasing order of linkage.
The algorithm selects the first feasible pair in this order.
Therefore, for every other candidate pair $(A,B)$
at iteration $t$, the associated dominance height $h(A,B)
=
\max\!\Big(\ell(A,B),
\max_{i\in A,\,j\in B} d(i,j)\Big)$
satisfies $h(A,B) \ge h_t,$ since otherwise $(A,B)$ would have been selected. Now consider any later iteration $s>t$.
Clusters at iteration $s$ are unions of clusters from iteration $t$,
and clusters are never split.
Let $(A_s,B_s)$ be the pair merged at iteration $s$,
with height $h_s$.

The dominance term $\max_{i\in A_s,\,j\in B_s} d(i,j)$ is computed over larger (or equal) sets than at iteration $t$.
Since taking a maximum over a superset cannot decrease its value, all cross cluster maxima are nondecreasing under cluster enlargement. Thus, the dominance term $(A_s,B_s)$ is at least as large as the dominance term for any corresponding merge at iteration $t$.

Because both the linkage order and the dominance term
cannot decrease over iterations, we must have $ h_s \ge h_t.$ Therefore, the merge heights are nondecreasing.
\end{proof}

In the next paragraph, we show that FCAC is sound; whenever it returns an ultrametric, the output satisfies both dominance and fairness.

\paragraph{Correctness} Each merge assigns $h=\max\{\ell(A,B),\max_{i\in A,j\in B} d(i,j)\},$ so $u(i,j)\ge d(i,j)$ for merged pairs. Since heights are nondecreasing, dominance holds globally. For $(i,j)$ with $j\in N_k(i)$, let $t$ be the iteration where $i,j$
first share a cluster. At that merge, the algorithm verifies $h_t\le\alpha d(i,j)+\beta,$ and $u(i,j)=h_t$.
Later merges do not alter heights, so fairness holds. If FCAC returns $u$, then $ u(i,j)\ge d(i,j), \quad u(i,j)\le\alpha d(i,j)+\beta
\quad \forall j\in N_k(i),$ hence $u\in U_{IF}^+$.

We next analyze the computational cost of \emph{FCAC} and discuss the complexity of the underlying feasibility problem.

\paragraph{Runtime of FCAC.}
At each iteration, FCAC examines all unordered pairs of current clusters.
In the worst case there are $O(n^2)$ candidate pairs.
For each candidate merge, the fairness check inspects at most
$O(nk)$ neighborhood constraints.
Since at most $n-1$ merges are performed,
the overall worst-case time complexity is
$O(n^3 k)$.
Space usage is $O(n^2)$ for storing the ultrametric.

\paragraph{Complexity of the feasibility problem.}
The underlying feasibility problem asks whether there exists
\emph{any} dominating ultrametric satisfying a system of
pairwise upper bound constraints.
For a fixed dendrogram, feasibility can be verified in polynomial time
by checking the induced inequalities.
However, deciding existence over the space of all ultrametrics
is combinatorial: merge heights are globally coupled through
the tree structure, and local consistency does not guarantee
global realizability (Section~\ref{sec4}).
Determining the precise computational complexity of this decision
problem, or identifying tractable structural subclasses,
remains an open direction.

\section{Experimental Results}
\label{experi}

We empirically evaluate $\alpha^\star(d,k)$ across synthetic and real world metrics to study (i) feasibility thresholds, (ii) the local global gap, and (iii) scaling behavior. In experiments, $\alpha^\star(d,k)$ denotes the smallest $\alpha$ for which FCAC returns a feasible ultrametric; hence reported values are algorithmic upper bounds on the true feasibility threshold.

\subsection{Synthetic Datasets}
We consider two complementary metric families:

\begin{itemize}
    \item \textbf{Gaussian mixtures in $\mathbb{R}^2$}, exhibiting
    heterogeneous but approximately hierarchical Euclidean geometry.
    \item \textbf{Shortest path metrics of random 3 regular graphs},
    isolating intrinsic global obstruction without local scale
    variation.
\end{itemize}

\paragraph{Feasibility Threshold in Euclidean Geometry}

We begin with Gaussian mixture instances consisting of two
well separated clusters in $\mathbb{R}^2$ with $k=3$. The local mutual neighborhood threshold equals $\alpha_k^{\mathrm{mut}}(d) = 5.0830$, while the minimal globally feasible slack is $\alpha^\star = 17.46.$ Feasibility is absent $\alpha \le 17$ and appears at
$\alpha \approx 18$, exhibiting a clear feasibility threshold
in the distortion parameter. Figure~\ref{fig:phase_transition}
illustrates this transition. The separation between $\alpha_k^{\mathrm{mut}}$
and $\alpha^\star$ demonstrates that local neighborhood
consistency substantially underestimates global ultrametric
distortion. Even in nearly hierarchical Euclidean geometry,
global feasibility requires significantly larger slack.

\begin{figure}[t]
\centering
\begin{tikzpicture}
\begin{axis}[
    width=0.8\linewidth,
    height=6cm,
    xlabel={Multiplicative slack $\alpha$},
    ylabel={Feasibility},
    ymin=-0.1, ymax=1.1,
    xmin=1, xmax=20,
    ytick={0,1},
    xtick={1,5,10,15,20},
    grid=major,
    thick
]
\addplot[
    color=black,
    mark=*,
] coordinates {
(1,0) (2,0) (3,0) (4,0) (5,0)
(6,0) (7,0) (8,0) (9,0) (10,0)
(11,0) (12,0) (13,0) (14,0)
(15,0) (16,0) (17,0)
(18,1) (19,1) (20,1)
};
\end{axis}
\end{tikzpicture}
\caption{Feasibility threshold in Gaussian mixture data.}
\label{fig:phase_transition}
\end{figure}

\paragraph{Empirical Evidence of the Local Global Gap}

To isolate purely global effects, we consider shortest path metrics of random $3$ regular graphs. 
These metrics exhibit uniform local scale: for $k=3$, all mutual neighbors lie at distance $1$, hence $\alpha_k^{\mathrm{mut}}(d_n)=1.$ However, the minimal feasible slack $\alpha^\star(d_n)$ grows strictly larger, as shown in Table~\ref{tab:expander_gap}. Thus, even in the absence of local scale imbalance, a nontrivial global distortion is required. This empirically illustrates the structural local global gap predicted by Proposition~\ref{prop:intrinsic-gap}.

\begin{table}[t]
\centering
\begin{tabular}{c|c|c}
$n$ & $\alpha_k^{\mathrm{mut}}(d_n)$ & $\alpha^\star(d_n)$ \\
\hline
32  & 1.0 & 5.21 \\
64  & 1.0 & 5.21 \\
128 & 1.0 & 6.00 \\
\end{tabular}
\caption{Shortest path metrics of random $3$ regular graphs. 
Although the local mutual heterogeneity ratio equals $1$, 
the minimal feasible global slack $\alpha^\star$ is strictly larger, demonstrating a strict local–global separation.}
\label{tab:expander_gap}
\end{table}


\paragraph{Scaling of Minimal Slack}

We next study how the minimal slack $\alpha^\star$ scales with the instance size $n$ for shortest path metrics of random $3$-regular graphs. Figure~\ref{fig:scaling1} shows the growth of $\alpha^\star$ as $n$ increases. The minimal slack grows with $n$, indicating that larger instances amplify intrinsic global obstruction.
This behavior is qualitatively consistent with classical lower bounds for tree embeddings of such graph metrics.
It reinforces the view that $\alpha^\star$ captures an intrinsic global geometric obstruction rather than a finite sample artifact.

\begin{figure}[t]
\centering
\begin{tikzpicture}
\begin{axis}[
    width=0.8\linewidth,
    height=6cm,
    xlabel={$n$},
    ylabel={Minimal slack $\alpha^\star(d,k)$},
    xmode=log,
    log basis x={2},
    xtick={16,32,64,128},
    ymin=0,
    grid=major,
    thick
]
\addplot[
    color=black,
    mark=*,
    smooth
] coordinates {
(16,6.49)
(32,12.97)
(64,14.96)
(128,26.44)
};
\end{axis}
\end{tikzpicture}
\caption{
Growth of minimal distortion parameter $\alpha^\star$
with instance size $n$.
}
\label{fig:scaling1}
\end{figure}
Having established phase behavior and intrinsic global obstruction in synthetic datasets, we now examine whether similar structural phenomena arise in real world data.
\subsection{Real world results}
We empirically study the geometric behavior of the minimal multiplicative slack $\alpha^\star(d,k)$
and compare it against the local slack parameter $\alpha_{\mathrm{mut}}(k)$.
Our experiments span three real world datasets Adult (UCI Census Income) \cite{adult_2}, Statlog (German Credit Data) \cite{german}, Iris \cite{iris_53}.

\paragraph{Obstruction Saturation in Neighborhood Size}

Figure~\ref{fig:alpha_k_small} plots $\alpha^\star(d,k)$ for small subsamples.
Across both Adult (n=100) and German (n=100), we observe a consistent
two phase behavior: $\alpha^\star(d,k)$ increases from $k=1$ to $k=3$ and then stabilizes.
In contrast, $\alpha_{\mathrm{mut}}(k)$ grows with $k$.

On Adult (n=100), $\alpha^\star(d,k)$ increases from $2.53$ at $k=1$
to $2.91$ at $k=3$ and remains constant thereafter.
On German (n=100), $\alpha^\star(d,k) \approx 1.48$ for all tested $k$. This indicates that global ultrametric obstruction is determined by
small scale geometric configurations and saturates at low neighborhood sizes,
whereas local fairness constraints continue to accumulate combinatorial tension. For Iris, the triangle based obstruction yields $\alpha^\star = 2$,
consistent with generic Euclidean geometry and significantly smaller
than the distortion observed in heterogeneous tabular data.

\begin{figure}[t]
\centering
\begin{tikzpicture}
\begin{axis}[
    width=0.8\linewidth,
    height=6cm,
    xlabel={$k$},
    ylabel={$\alpha^\star(d,k)$},
    xtick={1,3,5},
    legend style={at={(0.5,-0.2)},anchor=north,legend columns=3},
    ymin=0,
    ymax=3.5,
    grid=major
]

\addplot+[mark=o, thick] coordinates {
(1,2.53)
(3,2.91)
(5,2.91)
};

\addplot+[mark=square*, thick] coordinates {
(1,1.478)
(3,1.478)
(5,1.478)
};

\addplot+[mark=triangle*, thick] coordinates {
(1,2.0)
(3,2.0)
(5,2.0)
};

\legend{Adult (100), German (100), Iris (triangle)}

\end{axis}
\end{tikzpicture}
\caption{Minimal slack $\alpha^\star(d,k)$ versus neighborhood size $k$.
Obstruction saturates at small $k$ across datasets.}
\label{fig:alpha_k_small}
\end{figure}

\begin{figure}[t]
\centering
\begin{tikzpicture}
\begin{axis}[
    width=0.8\linewidth,
    height=6cm,
    xlabel={Sample size $n$},
    ylabel={$\alpha^\star(k=3)$},
    legend style={at={(0.5,-0.2)},anchor=north,legend columns=2},
    ymin=0,
    ymax=12,
    grid=major
]

\addplot+[mark=o, thick] coordinates {
(100,2.91)
(500,11.34)
};

\addplot+[mark=square*, thick] coordinates {
(100,1.478)
(1000,1.6826)
};

\legend{Adult, German}

\end{axis}
\end{tikzpicture}
\caption{Scaling of minimal slack $\alpha^\star(k=3)$ with sample size.
Adult exhibits strong growth, while German remains stable.}
\label{fig:scaling}
\end{figure}

\paragraph{Scaling Behavior}

Figure~\ref{fig:scaling} illustrates how $\alpha^\star(k=3)$
scales with sample size.
For Adult, increasing $n$ from $100$ to $500$
raises $\alpha^\star$ from $2.91$ to $11.34$.
In contrast, German increases only mildly
(from $1.48$ at $n=100$ to $1.68$ at $n=1000$). This suggests that in heterogeneous feature spaces,
rare local geometric violations accumulate with sample size
and significantly amplify global ultrametric distortion.

\paragraph{Algorithmic Baseline: FRT vs.\ FCAC.}
We compare FCAC with the FRT embedding~\cite{fakcharoenphol2004approximating}, 
which produces a dominating HST with expected $O(\log n)$ distortion.
We use the Adult dataset, retain numeric features, standardize them,
and compute the Euclidean metric on $n=800$ sampled points.
Results on \ref{tab:frt_fcac_adult} are averaged over five random seeds.

\begin{table}[h]
\centering
\small
\begin{tabular}{lccc}
\toprule
Method & Mean Slack $\alpha$ & Std & Runtime (s) \\
\midrule
FCAC (MST-based) & 1.00 & 0.00 & 73.19 \\
FRT              & 67.26 & 20.59 & 20.47 \\
\bottomrule
\end{tabular}
\caption{Slack and runtime comparison on Adult ($n=800$).
While FRT provides an expected $O(\log n)$ distortion guarantee
($\log n \approx 6.68$), its empirical slack is substantially larger.
FCAC achieves unit slack at increased computational cost.}
\label{tab:frt_fcac_adult}
\end{table}

Despite the worst-case $O(\log n)$ guarantee, FRT exhibits
substantial empirical distortion on this dataset.
In contrast, the geometry-aware construction attains unit slack,
indicating that the minimal dominating ultrametric closely matches
the intrinsic data geometry.

\section{Conclusion and Future Work}

We study individual fairness in hierarchical clustering through dominated ultrametric embeddings. We identify a sharp local slack threshold, prove stability under bounded perturbations, and establish an intrinsic $\Theta(\log n)$ separation between local feasibility and global realizability. Experiments reveal sharp feasibility transitions and dataset-dependent distortion regimes.

Open directions include characterizing the computational complexity of feasibility, tightening bounds for structured metric classes, developing scalable approximation algorithms, and extending this geometric framework to other multi scale representations and fairness notions.
\newpage
\bibliographystyle{plain}
\bibliography{uai2026-template/uai2026-template/arxiv}

\begin{thebibliography}{10}

\bibitem{abraham2012using}
Ittai Abraham and Ofer Neiman.
\newblock Using petal-decompositions to build a low stretch spanning tree.
\newblock In {\em Proceedings of the forty-fourth annual ACM symposium on Theory of computing}, pages 395--406, 2012.

\bibitem{amagata}
Daichi Amagata.
\newblock Fair k-center clustering with outliers.
\newblock In Sanjoy Dasgupta, Stephan Mandt, and Yingzhen Li, editors, {\em Proceedings of The 27th International Conference on Artificial Intelligence and Statistics}, volume 238 of {\em Proceedings of Machine Learning Research}, pages 10--18. PMLR, 02--04 May 2024.

\bibitem{bartal1996probabilistic}
Yair Bartal.
\newblock Probabilistic approximation of metric spaces and its algorithmic applications.
\newblock In {\em Proceedings of 37th Conference on Foundations of Computer Science}, pages 184--193. IEEE, 1996.

\bibitem{bateni2024scalablealgorithmindividuallyfair}
MohammadHossein Bateni, Vincent Cohen-Addad, Alessandro Epasto, and Silvio Lattanzi.
\newblock A scalable algorithm for individually fair k-means clustering, 2024.

\bibitem{adult_2}
Barry Becker and Ronny Kohavi.
\newblock {Adult}.
\newblock UCI Machine Learning Repository, 1996.
\newblock {DOI}: https://doi.org/10.24432/C5XW20.

\bibitem{bilenko2004integrating}
Mikhail Bilenko, Sugato Basu, and Raymond~J Mooney.
\newblock Integrating constraints and metric learning in semi-supervised clustering.
\newblock In {\em Proceedings of the twenty-first international conference on Machine learning}, page~11, 2004.

\bibitem{cohen2019hierarchical}
Vincent Cohen-Addad, Varun Kanade, Frederik Mallmann-Trenn, and Claire Mathieu.
\newblock Hierarchical clustering: Objective functions and algorithms.
\newblock {\em Journal of the ACM (JACM)}, 66(4):1--42, 2019.

\bibitem{dasgupta2016cost}
Sanjoy Dasgupta.
\newblock A cost function for similarity-based hierarchical clustering.
\newblock In {\em Proceedings of the forty-eighth annual ACM symposium on Theory of Computing}, pages 118--127, 2016.

\bibitem{davidson2005agglomerative}
Ian Davidson and SS~Ravi.
\newblock Agglomerative hierarchical clustering with constraints: Theoretical and empirical results.
\newblock In {\em European conference on principles of data mining and knowledge discovery}, pages 59--70. Springer, 2005.

\bibitem{dwork2011fairnessawareness}
Cynthia Dwork, Moritz Hardt, Toniann Pitassi, Omer Reingold, and Rich Zemel.
\newblock Fairness through awareness, 2011.

\bibitem{fakcharoenphol2004approximating}
Jittat Fakcharoenphol, Satish Rao, and Kunal Talwar.
\newblock Approximating metrics by tree metrics.
\newblock {\em ACM SIGACT News}, 35(2):60--70, 2004.

\bibitem{iris_53}
R.~A. Fisher.
\newblock {Iris}.
\newblock UCI Machine Learning Repository, 1936.
\newblock {DOI}: https://doi.org/10.24432/C56C76.

\bibitem{gan2020data}
Guojun Gan, Chaoqun Ma, and Jianhong Wu.
\newblock {\em Data clustering: theory, algorithms, and applications}.
\newblock SIAM, 2020.

\bibitem{han2023approx}
Lu~Han, Dachuan Xu, Yicheng Xu, and Ping Yang.
\newblock Approximation algorithms for the individually fair k-center with outliers.
\newblock {\em J. of Global Optimization}, 2022.

\bibitem{german}
Hans Hofmann.
\newblock {Statlog (German Credit Data)}.
\newblock UCI Machine Learning Repository, 1994.
\newblock {DOI}: https://doi.org/10.24432/C5NC77.

\bibitem{johnson1967hierarchical}
Stephen~C Johnson.
\newblock Hierarchical clustering schemes.
\newblock {\em Psychometrika}, 32(3):241--254, 1967.

\bibitem{kannan1995tree}
Sampath~K Kannan and Tandy~J Warnow.
\newblock Tree reconstruction from partial orders.
\newblock {\em SIAM Journal on Computing}, 24(3):511--519, 1995.

\bibitem{linial1995geometry}
Nathan Linial, Eran London, and Yuri Rabinovich.
\newblock The geometry of graphs and some of its algorithmic applications.
\newblock {\em Combinatorica}, 15(2):215--245, 1995.

\bibitem{mahabadi2020individual}
Sepideh Mahabadi and Ali Vakilian.
\newblock Individual fairness for k-clustering.
\newblock In {\em International conference on machine learning}, pages 6586--6596. PMLR, 2020.

\bibitem{binita}
Binita Maity, Shrutimoy Das, and Anirban Dasgupta.
\newblock Linear programming based approximation to individually fair k-clustering with outliers, 2024.

\bibitem{maity2025localsearchbasedindividuallyfair}
Binita Maity, Shrutimoy Das, and Anirban Dasgupta.
\newblock Local search-based individually fair clustering with outliers, 2025.

\bibitem{moseley2023approximation}
Benjamin Moseley and Joshua~R Wang.
\newblock Approximation bounds for hierarchical clustering: Average linkage, bisecting k-means, and local search.
\newblock {\em Journal of Machine Learning Research}, 24(1):1--36, 2023.

\bibitem{murtagh1983survey}
F.~Murtagh.
\newblock A survey of recent advances in hierarchical clustering algorithms.
\newblock {\em The Computer Journal}, 26(4):354--359, 11 1983.

\bibitem{murtagh2017algorithms}
Fionn Murtagh and Pedro Contreras.
\newblock Algorithms for hierarchical clustering: an overview, ii.
\newblock {\em Wiley Interdisciplinary Reviews: Data Mining and Knowledge Discovery}, 7(6):e1219, 2017.

\bibitem{papadimitriou1998combinatorial}
Christos~H Papadimitriou and Kenneth Steiglitz.
\newblock {\em Combinatorial optimization: algorithms and complexity}.
\newblock Courier Corporation, 1998.

\bibitem{serre2002trees}
Jean-Pierre Serre.
\newblock {\em Trees}.
\newblock Springer Science \& Business Media, 2002.

\end{thebibliography}

\newpage
\appendix

\section{Constructive Upper Bound}
\label{appendix:a}

\begin{proof}[Proof of Theorem~\ref{thm:sufficiency}]

Let $(V,d)$ be an $n$-point metric space. By the probabilistic tree embedding theorem of
r~\cite{fakcharoenphol2004approximating},
there exists a tree metric $u$ on $V$ such that

$d(i,j) \le u(i,j)
\quad \forall i,j,$ and $u(i,j) \le C \log n \cdot d(i,j)
\quad \forall i,j,$ for some universal constant $C$.

Moreover, the embedding can be chosen so that $u$
is an ultrametric (via hierarchical decomposition).

The embedding guarantees $d(i,j) \le u(i,j),$ that the domination constraint holds.

For any required neighbor pair $(i,j), u(i,j) \le C \log n \cdot d(i,j).$ Thus the multiplicative fairness constraint holds with $\alpha = C \log n.$

Therefore, $d(i,j) \le u(i,j) \le C \log n \cdot d(i,j)$ for all required pairs. Hence $u \in \mathcal U_{IF}^+$ and feasibility follows.

\end{proof}
\section{Proof of Proposition~\ref{prop:intrinsic-gap}}
\label{appendix:c}

We provide a formal proof of the intrinsic gap between
local obstruction and global feasibility.

\begin{proof}
Let $G_n$ be a constant-degree expander graph on $n$ vertices,
with degree $D$ independent of $n$.
Let $d_n$ denote the shortest-path metric on $G_n$.

Fix $k=D$,for every vertex $v$, the $k$ nearest neighbors under $d_n$ are exactly its graph neighbors, each at distance $1$. Since $G_n$ is $D$-regular, neighborhood relations are symmetric:
if $u$ is adjacent to $v$, then $v$ is also adjacent to $u$.
Thus every mutual $k$-nearest-neighbor pair satisfies $d_n(u,v)=1.$

For any vertex $v$ and any two mutual neighbors
$u,w \in N_k(v)$, $\frac{d_n(v,w)}{d_n(v,u)} = \frac{1}{1} = 1.$ Taking the maximum over all such triples yields  $\alpha_k^{\mathrm{mut}}(d_n)=1.$

It is a classical result that every embedding of the
shortest-path metric of a constant-degree expander
into a tree metric incurs distortion $\Omega(\log n)$(\cite{bartal1996probabilistic,fakcharoenphol2004approximating}).

Formally, for any tree metric $u$ satisfying $d_n(x,y) \le u(x,y) \quad \forall x,y,$ there exists $x,y$ such that
$u(x,y) \ge c \log n \, d_n(x,y)$ for a universal constant $c>0$.

Since every ultrametric is a tree metric,
the same lower bound applies to dominating ultrametrics.

Therefore,
\[
\alpha^\star(d,k)
=
\min_{u \in U,\, d_n \le u}
\max_{x,y}
\frac{u(x,y)}{d_n(x,y)}
=
\Theta(\log n).
\]

This establishes the intrinsic gap.
\end{proof}
\paragraph{Edge-level strengthening.}
Classical expander lower bounds imply that the
average stretch over edges is $\Omega(\log n)$.
Hence at least one edge $(x,y)$ satisfies
\[
\frac{u(x,y)}{d_n(x,y)} \ge c \log n.
\]
For $k$ equal to the degree, every edge is a mutual
$k$-nearest neighbor pair.
Thus at least one fairness constrained pair
requires $\Omega(\log n)$ multiplicative slack.

\section{Experimental Results}
In this section, we have added additional experiments.

\subsection{Synthetic dataset}
\paragraph{Fairness Distortion Trade-off} We next study the effect of the neighborhood size $k$ on Gaussian data.
Larger values of $k$ impose fairness constraints on more pairs,
thereby tightening the local Lipschitz requirements.

Figure~\ref{fig:tradeoff} shows that the minimal slack $\alpha^\star$
increases rapidly as $k$ grows.
This reflects the growing geometric tension between enforcing
fairness over larger neighborhoods and maintaining an ultrametric structure.

\begin{figure}[t]
\centering
\begin{tikzpicture}
\begin{axis}[
    width=0.8\linewidth,
    height=6cm,
    xlabel={Neighborhood size $k$},
    ylabel={Required slack $\alpha^\star$},
    ymin=0,
    xtick={1,2,3,4},
    grid=major,
    thick
]
\addplot[
    color=black,
    mark=square*,
    smooth
] coordinates {
(1,1.0000)
(2,8.1683)
(3,15.2469)
(4,26.3592)
};
\end{axis}
\end{tikzpicture}
\caption{
Minimal slack $\alpha^\star$ as a function of neighborhood size $k$
on Gaussian data. Increasing $k$ strengthens fairness constraints
and requires progressively larger global distortion.
}
\label{fig:tradeoff}
\end{figure}

The monotone growth of $\alpha^\star$ quantifies the cost of expanding
local fairness neighborhoods, illustrating the trade-off between
stronger local constraints and global hierarchical consistency.

\paragraph{Stability Under Metric Perturbations} Finally, we evaluate robustness on Gaussian data.
Let $\delta$ denote the minimum gap between consecutive
$k$-nearest-neighbor distances.
We perturb the metric so that $\|d-d'\|_\infty \le \epsilon$
and recompute $\alpha^\star$.

For an instance with $\alpha^\star = 17.46$, the observed variation
$|\Delta \alpha| := |\alpha^\star(d')-\alpha^\star(d)|$
is shown in Table~\ref{tab:stability}.

\begin{table}[t]
\centering
\begin{tabular}{c|c}
$\epsilon$ & $|\Delta \alpha|$ \\
\hline
0.0000 & 0.0000 \\
0.0011 & 0.0000 \\
0.0022 & 0.0000 \\
0.0033 & 0.0000 \\
0.0044 & 0.4987 \\
0.0056 & 0.9975 \\
0.0100 & 5.4862 \\
\end{tabular}
\caption{Stability of $\alpha^\star$ under $\ell_\infty$ metric perturbations.
The distortion parameter remains unchanged for sufficiently small $\epsilon$
and changes only once neighborhood identities are altered.}
\label{tab:stability}
\end{table}

For small perturbations ($\epsilon \le 0.003$),
the distortion parameter remains unchanged.
Deviations occur only after nearest-neighbor identities shift,
consistent with the stability guarantee of Theorem~\ref{thm:stability}.

Overall, the experiments support the interpretation of
$\alpha^\star(d)$ as a geometric distortion parameter.
It exhibits sharp feasibility thresholds, reveals intrinsic
local global gap, grows with structural complexity,
increases with stronger fairness constraints,
and remains stable under bounded perturbations.

\subsection{real world dataset}

We empirically study the geometric behavior of the minimal multiplicative slack $\alpha^\star(d,k)$ for Adult \cite{adult_2} and German datasets  \cite{german}, we additionally vary sample size to study scaling effects. Categorical attributes are one hot encoded and all features are standardized. Distances are computed using the Euclidean metric.

The magnitude of $\alpha^\star$ varies substantially across datasets.

\begin{center}
\begin{tabular}{lcc}
\toprule
Dataset & Sample Size & $\alpha^\star(k=3)$ \\
\midrule
German & 100 & 1.48 \\
German & 1000 & 1.68 \\
Adult & 100 & 2.91 \\
Adult & 500 & 11.34 \\
\bottomrule
\end{tabular}
\end{center}

German Credit consistently exhibits low distortion
($\alpha^\star \approx 1.5$--$1.7$),
indicating proximity to hierarchical structure.
In contrast, Adult requires substantially larger slack,
particularly as sample size increases.
Iris exhibits $\alpha^\star = 2$ under triangle based obstruction, consistent with generic Euclidean geometry. These results demonstrate that fairness constrained
hierarchical compatibility is highly dataset dependent.

\paragraph{Local Global Gap}

To highlight the interaction between local and global constraints,
Figure~\ref{fig:gap_k} plots the gap
$\alpha^\star(d,k) - \alpha_{\mathrm{mut}}(k).$
For both Adult and German, the gap is positive at $k=1$,
indicating that global geometry dominates infeasibility.
For larger $k$, the gap becomes negative,
showing that local fairness constraints rapidly overtake
global ultrametric obstruction. This sign reversal marks a structural transition from geometry dominated to combinatorially dominated infeasibility.

Figures comparing $\alpha^\star(d,k)$ to $\alpha_k^{mut}(d)$
should be interpreted with care: experiments enforce one-sided
neighborhood fairness, whereas $\alpha_k^{mut}$ corresponds to
the mutual formulation. The two local thresholds need not coincide.

\begin{figure}[t]
\centering
\begin{tikzpicture}
\begin{axis}[
    width=0.8\linewidth,
    height=6cm,
    xlabel={$k$},
    ylabel={$\alpha^\star(d,k) - \alpha_{\mathrm{mut}}(k)$},
    xtick={1,3,5},
    legend style={at={(0.5,-0.2)},anchor=north,legend columns=2},
    ymin=-12,
    ymax=2,
    grid=major
]

\addplot+[mark=o, thick] coordinates {
(1,1.53125)
(3,-7.74035)
(5,-9.89025)
};

\addplot+[mark=square*, thick] coordinates {
(1,0.47848)
(3,-0.43634)
(5,-0.71542)
};

\legend{Adult (100), German (100)}

\end{axis}
\end{tikzpicture}
\caption{Local global gap $\alpha^\star(d,k) - \alpha_{\mathrm{mut}}(k)$.
The sign reversal reflects transition from global obstruction to local constraint explosion.}
\label{fig:gap_k}
\end{figure}

\end{document}